\newtheorem{definition}{Definition}[section]
\newtheorem{theorem}{Theorem}[section]
\newtheorem{lemma}[theorem]{Lemma}
\newcommand{\Ac}{\mathcal{A}}
\newcommand{\Fc}{\mathcal{F}}
\newcommand{\Sc}{\mathcal{S}}
\newcommand{\av}{\boldsymbol{a}}
\newcommand{\norm}[1]{{\| #1\|}}
\newcommand{\change}[1]{{\color{black}#1}} 
\title{\LARGE \bf
Linear Convergence of Independent Natural Policy Gradient in Games with Entropy Regularization
}
\author{Youbang Sun, Tao Liu, P. R. Kumar and Shahin Shahrampour
\thanks{Youbang Sun and Shahin Shahrampour are with Department of Mechanical \& Industrial Engineering,
        Northeastern University, Boston, MA 02115, USA
        {\tt\small \{sun.youb \& s.shahrampour\}@northeastern.edu}}%
\thanks{Tao Liu and P. R. Kumar are with the Department of Electrical \& Computer Engineering, Texas A\&M University,
        College Station, 77843, USA
        {\tt\small \{tliu \& prk\}@tamu.edu}}%
}
\begin{document}


\maketitle
\thispagestyle{empty}
\pagestyle{empty}

\begin{abstract}
This work focuses on the entropy-regularized independent natural policy gradient (NPG) algorithm in multi-agent reinforcement learning.
In this work, agents are assumed to have access to an oracle with exact policy evaluation and seek to maximize their respective independent rewards. 
Each individual's reward is assumed to depend on the actions of all the agents in the multi-agent system, leading to a game between agents. 
\change{We assume all agents make decisions under a policy with bounded rationality, which is enforced by the introduction of entropy regularization.
In practice, a smaller regularization implies the agents are more rational and behave closer to Nash policies.
On the other hand, agents with larger regularization acts more randomly, which ensures more exploration.}
We show that, under sufficient entropy regularization, the dynamics of this system converge at a linear rate to the quantal response equilibrium (QRE). 
Although regularization assumptions prevent the QRE from approximating a Nash equilibrium, our findings apply to a wide range of games, including cooperative, potential, and two-player matrix games. We also provide extensive empirical results on multiple games (including Markov games) as a verification of our theoretical analysis.

\end{abstract}

\section{INTRODUCTION}

In the emerging field of reinforcement learning (RL), the topic of multi-agent reinforcement learning (MARL) has been increasingly gaining attention.
This surge in interest may be attributed to the fact that many real-world problems are multi-agent in nature, including tasks such as
robotics \cite{sartoretti2019distributed}, modern production systems \cite{bakakeu2021multi}, economic decision making \cite{trott2021building}, and autonomous driving \cite{shalev2016safe}. 

Although applying single-agent RL algorithms, like policy gradient (PG) and natural policy gradient (NPG), to individual agents in MARL may seem straightforward, analyzing multi-agent systems presents numerous challenges.
In the single-agent setting, the optimal policy selects the action with the highest cumulative reward and converges to the unique global optimal solution. However, in the multi-agent setting, the global policy is constructed by taking the product of the local policies. Agents have individual rewards in general, but each individual reward depends on the global actions of all, leading to a game between agents. Even for a game as simple as a two-agent cooperative matrix game, there can be potentially multiple local stationary points. 
These stationary points are known as Nash \change{E}quilibria (NE), where no agent can enjoy a larger reward by unilaterally changing its strategy.

For general games, it is known that a system where each agent follows the policy gradient update (i.e., gradient play) can easily fail \cite{shapley1964some}.
For a game to converge to an NE through gradient play, additional assumptions are needed, such as the assumption of a potential function and isolatedness of the Nash equilibria \cite{sun2023provably}. In addition, in MARL we encounter similar challenges as in single-agent RL, 
including navigating sub-optimal regions characterized by flat curvatures and managing the exploration-exploitation trade-off.
One mitigation strategy in practice is to enforce an entropy regularization \cite{mei2020global, kim2023adaptive, lan2023policy}.

Intuitively speaking, the addition of an entropy regularization term penalizes the policies that are not stochastic enough. The entropy regularization \change{places rationality into agents, where decisions are selected to be satisfactory rather than optimal, this} encourages the exploration of agents and prevents the system from being stuck at local sub-optimal policies \change{caused by pure strategies. The introduction of entropy was also highlighted by Soft Actor Critic \cite{haarnoja2018soft}, which is widely used today in Robotics.}
When entropy is introduced into the problem, the system converges to the quantal response equilibrium (QRE) \cite{mckelvey1995quantal} instead of NE. \change{A QRE refers to an equilibrium with bounded rationality, which we formally define in Def. \ref{def_QRE}.}

In this paper, we consider a general static game, where the system state is assumed to be fixed and no additional assumptions on rewards for all agents are imposed. Our framework subsumes various settings, such as cooperative games, potential games, and two-player matrix games. 

\subsection{Contributions} \label{sec:contribution}
Motivated by the effectiveness of entropy regularization in both single-agent RL and certain multi-agent settings \change{in games}, we have adapted the entropy-regularized natural policy gradient algorithm to games.
\change{While some existing works like \cite{cen2022independent} use QRE to approximate NE for some structured games, we consider the regularization as a given factor, and study the convergence for general games. We summarize our contribution as follows.
\begin{enumerate}
    \item We consider the NPG update with entropy regularization in games, and provide the exact algorithm update in Section \ref{sec:update}
    \item We study the convergence properties of the proposed algorithm and show in Section \ref{sec:convergence} that the system can reach a QRE at a linear convergence rate when entropy regularization is large enough.
    \item In Section \ref{sec:exps}, we present extensive numerical experiments demonstrating the effectiveness of the algorithm, and provide some discussion on its performance across various settings.
\end{enumerate}
}
 
Although our theoretical analysis only considers the static game setting, in Section \ref{sec:markov-exps} we conduct experiments for stochastic (Markov) games. We show that similar empirical results also hold for Markov games, the theoretical investigation of which is useful for future work.

\subsection{Related Works}
This section offers a review of the related literature on the topics of policy gradient-based algorithms in RL and independent learning in games.

\paragraph{Policy Gradient}
There is a lot of interest in the theoretical understanding of policy gradient methods in recent literature \cite{agarwal2021theory, bhandari2019global}. There are many variations of policy gradient methods under different parameterizations. An important extension of the policy gradient method is the natural policy gradient (NPG) method \cite{agarwal2021theory, liu2020improved, muller2024geometry}, which introduces the addition of pre-conditioning in the policy update based on the problem's geometry.
%
To promote exploration and improve stochasticity within the system, entropy regularization has been introduced. In general, entropy regularization has been shown to accelerate convergence rates for several algorithms. 
Policy gradient methods with entropy regularization include \cite{mei2020global} for PG and \cite{cen2020fast} for NPG. 
Additionally, a broad class of convex regularizers has been proposed in \cite{lan2023policy, zhan2023policy}. 


\paragraph{Independent Learning in Games}
Recent years have witnessed significant progress in understanding the system dynamics of independent learning algorithms in games. 
It has been shown in game theory that a system where agents use simple gradient play in a game could fail to converge, such as the ``cycling problem" shown in \cite{schafer2019competitive}.
Therefore, additional settings, namely a competitive setting and a cooperative setting have been considered.
For the competitive setting, zero-sum games have been studied by \cite{daskalakis2020independent, wei2021last}. 
A framework more general than the cooperative setting is the potential game setting \cite{cen2022independent}, where agents do not have the same rewards, but there exists a potential function tracking the value changes across all agents. 

These settings have also been extended from static games to stochastic games, where the system follows a Markov state transition model. A series of works tackle the system convergence rate in the Markov potential games setting \cite{zhang2021gradient, zhangglobal,sun2023provably}.

Entropy regularization is also widely considered in games. The convergence rate has been shown to be linear for two-player zero-sum games \cite{cen2021fast, cen2022faster}, and sub-linear for potential games \cite{cen2022independent}. \change{In particular, \cite{cen2022independent} studies NPG for potential games with entropy regularization and is of great relevance to our work.}

We note that although there are many works consider entropy regularization in games and study the system convergence to QRE, \change{most of the previous works, including \cite{cen2022independent},  address arbitrarily small regularization }\change{factors} and view QRE as an approximation of NE. Although these works provide theoretical insights that effectively demonstrate the intended use case, their effectiveness is largely confined to games with structure, such as zero-sum games or potential games. These works do not consider more general games.
In contrast, our work considers regularization as a constant penalizing factor and discusses the convergence of the system dynamics for regularized system rewards.

\section{\MakeUppercase{Problem Formulation}}
In this section, we introduce the basic setting for a general multi-player game with the consideration of entropy regularization.

\change{

Throughout this paper, we use $\norm{a}_1$ to denote the $L^1$ norm of $a$ and $\norm{a}_\infty$ for the $L^{\infty}$ norm. We denote $[n] := \{1, ..., n\}$. For the time varying sequence of a set of parameters $\{\theta^k_i\}_{i \in [n], k \in \mathbf{N}}$, we use superscript $\theta^k$ to denote the $k$-th time step and subscript $\theta_i$ to denote the $i$-th item in the set.

}

\subsection{Multi-Agent Games}
Consider a \change{tabular} strategic game $\mathcal{G} = (n, \mathcal{A}, \{r_i\}_{i \in [n]})$ consisting of $n$ agents.
The global \change{discrete} action space $\mathcal{A}= \mathcal{A}_1 \times ... \times \mathcal{A}_n$ is the product of individual action spaces, with the global action denoted by $\av := (a_1,..., a_n)$. The reward for each agent $i\in [n]$ is denoted as $r_i:  \Ac \to [0, 1]$. 

A mixed strategy for the entire system is a \textit{decentralized} multi-agent policy \cite{zhangglobal}, where all agents make decisions independently.
Therefore, the global system policy is denoted by $\pi\in \Delta(\Ac_1) \times \dots \times \Delta(\Ac_n) \subset \Delta(\Ac)$, where $\Delta$ denotes the probability simplex operator. We can write $\pi(\av) = \prod_{i\in [n]} \pi_i(a_i)$, where $\pi_i \in \Delta(\mathcal{A}_i)$ is the local policy for agent $i$. We also denote the combined policy of all agents other than $i$ as $\pi_{-i} := \prod_{j\in [n]\backslash\{i\}} \pi_j$, so that 
$\pi = \pi_i \times \pi_{-i}$. Similarly, we denote the combined action as $\av = (a_i, a_{-i})$, where $a_{-i} := \{a_j\}_{j \neq i}$.

With a slight abuse of notation, we represent the expectation of the reward $r_i$ under policy $\pi$ as $$r_i(\pi) := \mathbb{E}_{\av \sim \pi}[r_i(\av)].$$ 

We also define the \textit{marginalized reward function} of reward $r_i$ with respect to the policy $\pi_{-i}$ as $$\change{\bar{r}_i(a_i)} := \mathbb{E}_{a_{-i} \sim \pi_{-i}}[r_i(a_i, a_{-i})].$$
\change{We note that the calculation of the marginalized reward requires $\pi_{-i}$, the current policy of all players in the network. Furthermore, $\pi_{-i}$ is omitted in notation when the corresponding policy is clear from context.}

Next, we introduce the notion of Nash equilibrium (NE) \cite{nash1950equilibrium} in games. 

\begin{definition}[Nash Equilibrium]
    A joint policy $\pi^*$ is a Nash equilibrium if
    \begin{align*}
        r_i(\pi_i^*, \pi_{-i}^*) \geq r_i(\pi_i', \pi_{-i}^*), \ \ \forall \pi_i' \in \Delta (\mathcal{A}_i), \forall i \in [n].
    \end{align*}
\end{definition}

\vspace{0.2cm}
It is known that if mixed strategies (where a player assigns a strictly positive probability to every pure strategy) are allowed, at least one NE exists in any finite game \cite{nash1950equilibrium}.

\subsection{Entropy Regularization in Games}
The Shannon entropy of policy $\pi_i$, defined as
$$\mathcal{H}(\pi_i) := -\sum_{a_i \in \mathcal{A}_i} \pi_i(a_i) \log \pi_i(a_i),$$ measures the level of randomness in actions of agent $i$. 
When entropy is added to the problem, the \change{regularized} objective for agent $i$ is modified to $\change{\hat{r}_{i}}(\pi) := r_{i}(\pi)+\tau \mathcal{H}(\pi_i)$.

With the consideration of entropy, a new type of equilibrium for the system has been defined in \cite{mckelvey1995quantal}, referred to as the quantal response equilibrium (QRE) or logit equilibrium.

\begin{definition}[Quantal Response Equilibrium]\label{def_QRE}
    A joint policy $\pi^*$ is a quantal response equilibrium when it holds that for any given $\tau$,
\begin{align*}
    \change{\hat{r}_{i}}(\pi_{i}^*, \pi_{-i}^*) \geq \change{\hat{r}_{i}}(\pi_i', \pi_{-i}^*), 
    \ \ \forall \pi_i' \in \Delta (\mathcal{A}_i)&, \forall i \in [n].
\end{align*}
\end{definition}

\vspace{0.2cm}
It can be easily verified that when a QRE has been reached, each agent uses a policy that assigns a probability of actions according to the marginalized reward, \change{$\pi_{i}^* \propto \exp{\left(\bar{r}_{i}(\cdot)/\tau\right)}$.}

\change{This is often referred to by the literature as the policy with bounded rationality \cite{cen2022independent} with rationality parameter $\frac{1}{\tau}$. Intuitively, an NE refers to a perfectly rational policy with $\tau \rightarrow 0$, whereas a fully random policy with $\tau \rightarrow \infty$ is considered as completely non-rational.} 



\section{\MakeUppercase{Main Results}}\label{sec:thm}

\change{In this section, we study the dynamics of a multi-agent system where each agent seeks a policy to maximize their individual regularized reward. We first provide the exact algorithm update in Section \ref{sec:update}, then in Section \ref{sec:convergence} we show that with under sufficient entropy regularization, the game converges to a QRE at a linear rate.}

\subsection{Algorithm Update}\label{sec:update}

\change{We first formulate the NPG update applied on agents, which has been studied for single-agent RL by \cite{agarwal2021theory}.

Since policies are constrained on the probability simplex, in order to relax this constraint, the softmax parameterization has been widely adopted. A set of unconstrained} parameters $\theta_i \in \mathbb{R}^{|\Ac_i|}$ \change{are updated}, and the policy
\change{ is calculated by
$\pi_{i}(a_j) = \frac{\exp{(\theta_i(a_j)})}{\sum_{a_k \in \Ac_i}\exp{(\theta_i(a_k))}}.$
}

In the static games setting, \change{the NPG algorithm performs gradient updates that are pre-conditioned on the problem geometry,}
\begin{equation}\label{eq:NPG}
    \theta_i^{k+1} = \theta_i^{k} + \eta \Fc_{\theta_i}^{\dagger} \frac{\partial}{\partial \theta_i}\change{\hat{r}_{i}}({\pi}),
\end{equation}
\change{where $\eta$ denoted the step-size. Based on the definition of the regularized reward $\change{\hat{r}_{i}}({\pi})$, the policy gradient for agent $i$ can be calculated as}
\begin{equation*}
        \resizebox{0.93\hsize}{!}{$
        \change{\frac{\partial \change{\hat{r}_{i}}({\pi})}{\partial \theta_{i}(a_k)} = \pi_i(a_k)\left(\bar{r}_{i}(a_k) - \tau \log (\pi_i(a_k)) - \hat{r}_{i}(\pi)\right)}. 
$}
\end{equation*}

\change{Moreover, $\Fc_{\theta_i}^{\dagger}$ in \eqref{eq:NPG} is the Moore-Penrose pseudo-inverse of the Fisher information matrix \cite{agarwal2021theory}, defined as,}
$$\Fc_{\theta_i} := \mathbb{E}_{a_i \sim \pi_i}[(\nabla_{\theta_i} \log \pi_i(a_i))(\nabla_{\theta_i} \log \pi_i(a_i))^\top].$$


\change{Using step-size $\eta$, the} corresponding update for \change{agent $i$ under} softmax parameterization \cite{cen2022independent} becomes
\begin{equation}\label{SG-entropy-update}
    \begin{aligned}
        &\pi_i^{k+1}(a_i) \propto  \pi_i^k(a_i)^{1-\eta \tau} \exp ({\eta \bar{r}_i^{k}(a_i)}),
    \end{aligned}
\end{equation}
where $\bar{r}_i^k(a_i) = \mathbb{E}_{a_{-i} \sim \pi^k_{-i}}[r_i(a_i, a_{-i})].$ Intuitively speaking, when $\pi_{-i}^k$ is fixed, \change{the game reduces to a single-agent RL problem, and }the updates shown in \eqref{SG-entropy-update} will converge to a local optimal \change{policy}, with 
\begin{equation}\label{eq:optimal_pi}
    \pi_{i}^{k\ast} (a_i) \propto \exp{(\bar{r}^{k}_i(a_i) / \tau)}, \ \ \text{for $0<\eta\tau<1$.}
\end{equation}

\subsection{Convergence Analysis} \label{sec:convergence}
Before presenting our main theorem, we first introduce the notion of \textit{QRE-gap} as 
\begin{equation*}
\resizebox{.93\hsize}{!}{$
\begin{aligned}
    \textit{QRE-gap}(\pi) = \max_{i \in [n], \pi_i' \in \Delta(\Ac_i)} \{\change{\hat{r}_{i}}(\pi_i', \pi_{-i}) - \change{\hat{r}_{i}}(\pi_i, \pi_{-i})\},
\end{aligned}
$}
\end{equation*}
\change{
where the maximum is taken when $\pi_i' = \pi_{i}^{k\ast}$ given in \eqref{eq:optimal_pi}.

For the case of $\tau =0$, the agents become pure rational, and the \textit{QRE-gap} recovers \textit{NE-gap}.}
It is easily verified that a system has reached a QRE if and only if $\textit{QRE-gap} = 0$. We study the convergence of the \textit{QRE-gap} in this section. For the ease of notation, we denote the \textit{QRE-gap}  at iteration $k$ {(i.e., policy $\pi^k$)} by $\textit{QRE-gap}^k$. 

Given the definition of $\pi_{i}^{k\ast}$ in \eqref{eq:optimal_pi}, we have:
\begin{equation}\label{eq:QRE-gap}
\begin{aligned}
    &\textit{QRE-gap}^k =\max_{i \in [n]} \{\change{\hat{r}_{i}}(\pi_{i}^{k\ast}, \pi_{-i}^k) - \change{\hat{r}_{i}}(\pi_i^k, \pi_{-i}^k)\}. \\
    =& \max_{i\in[n]} \left[ \langle \pi_{i}^{k\ast} - \pi_{i}^{k} , \bar{r}_i^{k} \rangle + \tau (\mathcal{H}(\pi_{i}^{k\ast}) - \mathcal{H}(\pi_{i}^{k}))\right]\\
    =& \max_{i\in[n]} \big[ \langle \pi_{i}^{k\ast} - \pi_{i}^{k} , \tau\log \pi_{i}^{k\ast}\rangle\\
    & \quad \quad\quad- \tau \langle \pi_{i}^{k\ast} , \log \pi_{i}^{k\ast}\rangle + \tau \langle \pi_{i}^{k} , \log \pi_{i}^{k}\rangle
    \big]\\
    =& \tau \max_{i\in[n]} \left[  \langle \pi_{i}^{k} , \log \pi_{i}^{k}-\log \pi_{i}^{k\ast}\rangle
    \right].
\end{aligned}
\end{equation}

Motivated by \cite{cen2020fast}, we introduce the following auxiliary sequence $\{\xi^k_i \in \mathbb{R}^{|\mathcal{A}_i|}, i \in [n]\}$ \change{to help with further analysis.}
\begin{equation}\label{eq:aux-def}
\begin{aligned}
    \xi^0_i(a_i) &= \norm{\exp (\bar{r}_i^0/\tau)}_1\pi_i^0(a_i),\\
    \xi^{k+1}_i(a_i) &= \xi^{k}_i(a_i)^{1-\eta\tau} \exp(\eta \bar{r}_i^k(a_i)).
\end{aligned}
\end{equation}

By the definition of the auxiliary sequence, two consecutive iterates $\xi^{k+1}_i(a_i)$ and $\xi^{k}_i(a_i)$ satisfy the following equality
\begin{equation}\label{eq:aux-iter}
\resizebox{.93\hsize}{!}{$
\begin{aligned}
    &\log \xi^{k+1}_i(a_i) - \bar{r}_i^{k+1}(a_i)/\tau \\
        =& (1-\eta\tau) \left(\log \xi^{k}_i(a_i) - \bar{r}_i^{k}(a_i)/\tau
        \right) + \left(\bar{r}_i^{k}(a_i) - \bar{r}_i^{k+1}(a_i)\right)/\tau.
\end{aligned}
$}
\end{equation}

It can be observed that $\pi_i^{k} \propto \xi_i^k$ according to Equations \eqref{SG-entropy-update} and \eqref{eq:aux-def}. We introduce the following lemma to establish direct relationships between $\pi_i^{k} $ and $ \xi_i^k$.

\begin{lemma}[\cite{cen2020fast}]\label{lem:pi-xi}
For any two probability distributions $\pi_1, \pi_2 \in \Delta(\Ac)$ that satisfy $$\pi_1(a) \propto \exp{(\theta_1(a))},~~\text{and}~~ \pi_2(a) \propto \exp{(\theta_2(a))},$$ 
with $\theta_1, \theta_2 \in \mathbb{R}^{|\Ac|}$, the following inequality holds
$$\norm{\log(\pi_1) - \log(\pi_2)}_\infty \leq 2 \norm{\theta_1 - \theta_2}_\infty.$$
\end{lemma}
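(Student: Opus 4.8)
The plan is to translate the statement about policies into one about their softmax exponents, and then to control the only genuinely new quantity that appears: the difference of the two log-partition functions. Write $Z_j := \sum_{a \in \Ac} \exp(\theta_j(a))$ for $j \in \{1,2\}$, so that $\log \pi_j(a) = \theta_j(a) - \log Z_j$. Subtracting the two identities gives
\[
\log \pi_1(a) - \log \pi_2(a) = \big(\theta_1(a) - \theta_2(a)\big) - \big(\log Z_1 - \log Z_2\big),
\]
and hence, taking the $L^\infty$ norm over $a \in \Ac$ and using the triangle inequality,
\[
\norm{\log \pi_1 - \log \pi_2}_\infty \leq \norm{\theta_1 - \theta_2}_\infty + |\log Z_1 - \log Z_2|.
\]

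The remaining step is to show $|\log Z_1 - \log Z_2| \leq \norm{\theta_1 - \theta_2}_\infty$. Setting $\delta := \norm{\theta_1 - \theta_2}_\infty$, we have $\theta_1(a) \leq \theta_2(a) + \delta$ for every $a$, so $\exp(\theta_1(a)) \leq e^{\delta}\exp(\theta_2(a))$; summing over $a$ yields $Z_1 \leq e^{\delta} Z_2$, i.e. $\log Z_1 - \log Z_2 \leq \delta$. Running the same argument with the roles of $\theta_1$ and $\theta_2$ exchanged gives $\log Z_2 - \log Z_1 \leq \delta$, whence $|\log Z_1 - \log Z_2| \leq \delta$. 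Substituting back into the previous display produces the claimed bound $\norm{\log \pi_1 - \log \pi_2}_\infty \leq 2 \norm{\theta_1 - \theta_2}_\infty$.

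Honestly there is no real obstacle here; the lemma is just the $1$-Lipschitzness of the log-sum-exp normalizer in the $L^\infty$ metric, plus a triangle inequality. The monotonicity argument above is the cleanest route, but one could equally invoke the mean value theorem on $\theta \mapsto \log\sum_{a}\exp(\theta(a))$ after noting its gradient is a probability vector, hence has $L^1$ norm $1$, so the directional change is bounded by the $L^\infty$ displacement of $\theta$. I would present the two displays above essentially verbatim as the full proof.
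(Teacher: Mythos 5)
Your proof is correct and complete: the decomposition $\log\pi_j(a) = \theta_j(a) - \log Z_j$ combined with the $1$-Lipschitzness of the log-partition function in the $L^\infty$ norm is exactly the standard argument, and it coincides with the proof in the cited source \cite{cen2020fast}, to which the paper defers rather than reproving the lemma. Nothing needs to be added.
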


The proof of this lemma is provided by \cite{cen2020fast}. Next, we introduce the following lemma regarding decentralized multi-agent policies.
\begin{lemma}\label{lem:diff-pi}
    For two sets of policies $\{\pi^1_i\},\{\pi^2_i\}, i \in [n]$, where each policy $\pi_i^j \in \Delta(\mathcal{A}_i)$, we have the following inequality:
    \begin{equation*}
    \resizebox{0.95\hsize}{!}{$
        \begin{aligned}
            &\sum_{a_1,...,a_n}| \pi^1_1(a_1)\times ... \times \pi^1_n(a_n)-  \pi^2_1(a_1)\times ... \times \pi^2_n(a_n)| \\
            \leq & \sum_{i\in [n]} \sum_{a_i} |\pi_i^1(a_i) - \pi^2_i(a_i)|,
        \end{aligned}
        $}
    \end{equation*}
    i.e., 
    for 
    two global policies $\mathbf{\pi}^1, \mathbf{\pi}^2$, 
    $$\|\mathbf{\pi}^1 - \mathbf{\pi}^2 \|_1 \leq \sum_i \|\pi_i^1 - \pi^2_i\|_1.$$
\end{lemma}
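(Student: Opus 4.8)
The plan is to prove the statement by induction on the number of agents $n$, reducing the general case to the two-factor case. The key identity is a telescoping decomposition: for two product distributions, write
\[
\prod_{i\in[n]}\pi^1_i(a_i) - \prod_{i\in[n]}\pi^2_i(a_i) = \sum_{j\in[n]} \Big(\prod_{i<j}\pi^2_i(a_i)\Big)\big(\pi^1_j(a_j)-\pi^2_j(a_j)\big)\Big(\prod_{i>j}\pi^1_i(a_i)\Big),
\]
which is the standard ``hybrid'' telescoping trick where we swap $\pi^1_i$ for $\pi^2_i$ one coordinate at a time. This identity is verified by noting that consecutive terms in the telescoping sum cancel.

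From here the argument is a direct estimate. First I would apply the triangle inequality to $\|\mathbf{\pi}^1-\mathbf{\pi}^2\|_1 = \sum_{a_1,\dots,a_n}|\cdots|$ using the telescoping identity, bringing the absolute value inside the sum over $j$. Then, for each fixed $j$, I would factor the sum over $(a_1,\dots,a_n)$ as a product of three independent sums: $\sum_{a_i,\,i<j}\prod_{i<j}\pi^2_i(a_i)$, which equals $1$ because each $\pi^2_i$ is a probability distribution; $\sum_{a_j}|\pi^1_j(a_j)-\pi^2_j(a_j)| = \|\pi^1_j-\pi^2_j\|_1$; and $\sum_{a_i,\,i>j}\prod_{i>j}\pi^1_i(a_i) = 1$ for the same reason. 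Multiplying these gives $\|\pi^1_j-\pi^2_j\|_1$ for the $j$-th term, and summing over $j\in[n]$ yields exactly $\sum_{i\in[n]}\|\pi^1_i-\pi^2_i\|_1$, which is the claim. The second displayed inequality in the statement is just a restatement of the first in norm notation.

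There is no serious obstacle here; the only thing requiring care is bookkeeping in the telescoping identity — making sure the indices on the partial products ($i<j$ versus $i>j$) are consistent so that the sum genuinely collapses, and that every intermediate factor is a nonnegative probability distribution so the factorization of the multi-index sum is valid (this uses nonnegativity of policies, so the absolute value on the middle factor can be pulled out while the outer factors stay without absolute values). An equivalent and perhaps cleaner route is induction: treat $\mathbf{\pi}^1 = \pi^1_1\times \boldsymbol{\pi}^1_{-1}$ and $\mathbf{\pi}^2 = \pi^2_1\times\boldsymbol{\pi}^2_{-1}$, add and subtract $\pi^1_1\times\boldsymbol{\pi}^2_{-1}$, use $\|\pi^1_1\times(\boldsymbol{\pi}^1_{-1}-\boldsymbol{\pi}^2_{-1})\|_1 = \|\boldsymbol{\pi}^1_{-1}-\boldsymbol{\pi}^2_{-1}\|_1$ and $\|(\pi^1_1-\pi^2_1)\times\boldsymbol{\pi}^2_{-1}\|_1 = \|\pi^1_1-\pi^2_1\|_1$, and invoke the induction hypothesis on the $(n-1)$-agent factor $\|\boldsymbol{\pi}^1_{-1}-\boldsymbol{\pi}^2_{-1}\|_1 \le \sum_{i\ge 2}\|\pi^1_i-\pi^2_i\|_1$. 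I would present the telescoping version as the main proof since it is self-contained in one line of algebra.
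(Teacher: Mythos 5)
Your proof is correct, and it is essentially the paper's argument: the one-shot telescoping over hybrid products is just the unrolled form of the paper's proof, which inserts a single hybrid term for $n=2$ (adding and subtracting $\pi_1^2(a_1)\pi_2^1(a_2)$) and then handles general $n$ by induction, exactly as in your alternative route. Both rely on the same key facts --- triangle inequality plus marginalizing out the untouched probability factors, each summing to $1$ --- so no gap to report.
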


\change{The proof of which is in Section \ref{sec:supp}. }
From Lemma \ref{lem:diff-pi}, we are able to evaluate the difference between the marginalized rewards of two consecutive iterations, which is used to prove the theorem below:
\begin{theorem}\label{thm:static}
Consider a static game with independent NPG update shown in \eqref{SG-entropy-update}, with the regularization factor $\tau > 2\sum_{i\in [n]} |\Ac_i|$ and learning rate $\eta < \frac{1}{\tau - 2\sum_{i\in [n]} |\Ac_i|}$. We have
\begin{equation*}
\resizebox{0.93\hsize}{!}{$
    \textit{QRE-gap}^k \leq 2\tau (1-\eta \tau + 2\eta \sum_{i}|\mathcal{A}_i|)^k \norm{\log \pi^0_i - \log \pi^{0\ast}_i}_\infty,
    $}
\end{equation*}
where $\pi^{0\ast}_i \propto \exp(\Bar{r}_i^0/\tau)$.

\end{theorem}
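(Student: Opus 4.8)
The plan is to collapse the whole argument onto a single scalar potential per agent, $u_i^k := \norm{\log \xi_i^k - \bar r_i^k/\tau}_\infty$, where $\xi_i^k$ is the auxiliary sequence of \eqref{eq:aux-def}, and to show that $U^k := \max_{i\in[n]} u_i^k$ contracts geometrically with ratio $\rho := 1-\eta\tau+2\eta\sum_{i}|\mathcal{A}_i|$, after which the bound on $\textit{QRE-gap}^k$ follows by translation. First I would reduce the gap to $U^k$: from the last line of \eqref{eq:QRE-gap}, $\textit{QRE-gap}^k = \tau\max_i \langle \pi_i^k, \log\pi_i^k - \log\pi_i^{k\ast}\rangle \le \tau\max_i \norm{\log\pi_i^k - \log\pi_i^{k\ast}}_\infty$ since each $\pi_i^k \in \Delta(\mathcal{A}_i)$; because $\pi_i^k \propto \xi_i^k = \exp(\log\xi_i^k)$ by \eqref{SG-entropy-update}--\eqref{eq:aux-def} and $\pi_i^{k\ast}\propto\exp(\bar r_i^k/\tau)$ by \eqref{eq:optimal_pi}, Lemma~\ref{lem:pi-xi} gives $\textit{QRE-gap}^k \le 2\tau\, U^k$. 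The same identification at $k=0$ (using $\xi_i^0 = \norm{\exp(\bar r_i^0/\tau)}_1\pi_i^0$) shows $u_i^0 = \norm{\log\pi_i^0 - \log\pi_i^{0\ast}}_\infty$, so it remains to establish $U^k \le \rho^k U^0$.

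For the contraction I would start from the one-step identity \eqref{eq:aux-iter}: taking $\norm{\cdot}_\infty$ and using $0 \le 1-\eta\tau$ (part of the standing assumption $0<\eta\tau<1$) gives $u_i^{k+1} \le (1-\eta\tau)\,u_i^k + \tfrac1\tau \norm{\bar r_i^k - \bar r_i^{k+1}}_\infty$. The marginalized-reward drift is controlled because $\bar r_i^k(a_i)=\mathbb{E}_{a_{-i}\sim\pi_{-i}^k}[r_i(a_i,a_{-i})]$ with $r_i\in[0,1]$, hence $\norm{\bar r_i^k - \bar r_i^{k+1}}_\infty \le \norm{\pi_{-i}^k - \pi_{-i}^{k+1}}_1 \le \sum_{j\neq i}\norm{\pi_j^k - \pi_j^{k+1}}_1$ by Lemma~\ref{lem:diff-pi}.

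The crux is to bound the single-agent policy step $\norm{\pi_j^k - \pi_j^{k+1}}_1$ by the potential $u_j^k$. For probability vectors $p,q$ on $\mathcal{A}_j$ the mean value theorem gives $|p(a)-q(a)| \le |\log p(a)-\log q(a)|$ (the intermediate point lies in $(0,1]$), so $\norm{p-q}_1 \le |\mathcal{A}_j|\,\norm{\log p - \log q}_\infty$. Applying this to $p=\pi_j^k$, $q=\pi_j^{k+1}$, then Lemma~\ref{lem:pi-xi} to pass from log-policies to the $\xi$-parameters, and finally the explicit update \eqref{eq:aux-def}, which yields $\log\xi_j^{k+1}-\log\xi_j^k = \eta\tau(\bar r_j^k/\tau - \log\xi_j^k)$ and hence $\norm{\log\xi_j^{k+1}-\log\xi_j^k}_\infty = \eta\tau\, u_j^k$, I obtain $\norm{\pi_j^k - \pi_j^{k+1}}_1 \le 2\eta\tau|\mathcal{A}_j|\,u_j^k$. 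Substituting back, $u_i^{k+1} \le (1-\eta\tau)\,u_i^k + 2\eta\sum_{j\neq i}|\mathcal{A}_j|\,u_j^k \le \rho\, U^k$; maximizing over $i$ gives $U^{k+1}\le \rho\,U^k$, and $\rho\in(0,1)$ since $2\sum_i|\mathcal{A}_i| < \tau$ (making $\rho<1$) and $\eta < 1/(\tau - 2\sum_i|\mathcal{A}_i|)$ (making $\rho>0$). Iterating and combining with $\textit{QRE-gap}^k \le 2\tau\,U^k$ and $U^0 = \max_i\norm{\log\pi_i^0-\log\pi_i^{0\ast}}_\infty$ completes the proof.

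I expect the third step to be the main obstacle: one must convert the $L^1$ change between consecutive local policies into the auxiliary potential \emph{without accumulating spurious constants}, so that precisely the factor $|\mathcal{A}_j|$ (and the factor $2$ from Lemma~\ref{lem:pi-xi}) appears --- this is exactly what forces the hypothesis $\tau > 2\sum_i|\mathcal{A}_i|$ and produces the contraction ratio $1-\eta\tau+2\eta\sum_i|\mathcal{A}_i|$. A secondary subtlety is that agent $i$'s recursion couples to the potentials of all other agents through the marginalized-reward drift, so the contraction cannot be proven agent by agent but must be closed simultaneously by passing to $U^k=\max_i u_i^k$.
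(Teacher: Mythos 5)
Your proposal is correct and follows essentially the same route as the paper's own proof: the same auxiliary sequence \eqref{eq:aux-def}, the same per-agent quantity $\norm{\log\xi_i^k - \bar r_i^k/\tau}_\infty$ contracted through \eqref{eq:aux-iter}, Lemma~\ref{lem:diff-pi}, and Lemma~\ref{lem:pi-xi} with ratio $1-\eta\tau+2\eta\sum_i|\mathcal{A}_i|$, and the same H\"older-plus-Lemma~\ref{lem:pi-xi} translation from \eqref{eq:QRE-gap} to the \textit{QRE-gap}. The only differences are cosmetic: you spell out the mean-value step $\norm{\pi_j^{k+1}-\pi_j^k}_\infty \le \norm{\log\pi_j^{k+1}-\log\pi_j^k}_\infty$ and note that the initialization gives the exact identity $\norm{\log\xi_i^0-\bar r_i^0/\tau}_\infty = \norm{\log\pi_i^0-\log\pi_i^{0\ast}}_\infty$, both of which the paper uses implicitly.
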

\begin{proof}
    For agent $i$, we know that the marginalized reward is \change{is upper-bounded by $1$, with the help of Lemma \ref{lem:diff-pi}}, we have
    \begin{equation*}
    \resizebox{0.95\hsize}{!}{$
    \begin{aligned}
        \norm{\bar{r}_i^{k+1} - \bar{r}_i^k}_\infty &\leq \norm{\pi_{-i}^{k+1} - \pi_{-i}^k}_1\leq \sum_{j\neq i} \norm{\pi_{j}^{k+1} - \pi_{j}^k}_1\\
        &\leq \sum_{j} |\Ac_j| \norm{\pi_{j}^{k+1} - \pi_{j}^k}_\infty\\
        &\leq \sum_{j} |\Ac_j| \norm{\log \pi_{j}^{k+1} - \log \pi_{j}^k}_\infty\\
        &\leq 2\sum_{j} |\Ac_j| \norm{\log \xi_{j}^{k+1} - \log \xi_{j}^k}_\infty\\
        &\leq \big(2\eta \tau \sum_{j} |\Ac_j| \big)
        \max_{i \in [n]}\norm{\log \xi_{i}^{k} - \bar{r}_i^{k}/\tau}_\infty ,
    \end{aligned}
    $}
    \end{equation*}
where the second to last inequality follows from Lemma \ref{lem:pi-xi}, and the last inequality follows from the definition of $\xi_i^k.$    

\change{We can then provide an upper bound on the following term,}
\begin{equation*}
\resizebox{0.95\hsize}{!}{$
\begin{aligned}
    &\max_{i\in[n]}\norm{\log \xi^{k+1}_i - \bar{r}_i^{k+1}/\tau}_\infty \\
        \leq & \max_{i\in[n]}\{(1-\eta\tau)  \norm{\log \xi^{k}_i - \bar{r}_i^{k}/\tau }_\infty
        + \norm{\bar{r}_i^{k} - \bar{r}_i^{k+1}}_\infty/\tau\}\\
        \leq & (1-\eta\tau + 2\eta  \sum_{j} |\Ac_j|) \max_{i\in[n]}\norm{\log \xi^{k}_i - \bar{r}_i^{k} /\tau }_\infty\\
        \leq & (1-\eta\tau + 2\eta  \sum_{j} |\Ac_j|)^{ k+1} \max_{i\in[n]}\norm{\log \xi^{0}_i - \bar{r}_i^{0} /\tau }_\infty,   
\end{aligned}
$}
\end{equation*}
\change{Here, the first inequality is provided given the properties of the auxiliary sequence in \eqref{eq:aux-iter}, the second inequality comes directly from the previous upper bound, the bound is finished by recursion.}

With the help of \eqref{eq:QRE-gap}, we can bound the QRE-gap by,

\begin{equation*}
\resizebox{0.95\hsize}{!}{$
\begin{aligned}
    &\textit{QRE-gap}^k \\
    =& \max_{i\in[n]} \left[  \tau \langle \pi_{i}^{k} , \log \pi_{i}^{k}-\log \pi_{i}^{k\ast}\rangle    \right]\\
    \leq & \tau \max_{i\in[n]}\norm{\log \pi_{i}^{k} - \log \pi_{i}^{k\ast}}_\infty\\
    \leq& 2\tau \max_{i\in[n]} \norm{\log \xi^{k}_i - \bar{r}_i^{k}/\tau }_\infty\\
     \leq& 2\tau (1-\eta\tau + 2\eta \sum_{i} |\Ac_i|)^k \max_{i\in[n]}\norm{\log \xi^{0}_i - \bar{r}_i^{0}/\tau }_\infty \\
    \leq& 2\tau (1-\eta \tau + 2\eta \sum_{i}|\mathcal{A}_i|)^k \max_{i\in[n]}\norm{\log \pi^0_i - \log \pi^{0\ast}_i}_\infty .
\end{aligned}
$}
\end{equation*}

\change{We apply the Hölder's inequality in the second line, then Lemma \ref{lem:pi-xi} and the previous inequality are used to complete the proof.}

\end{proof}
\change{
Theorem \ref{thm:static} presents an interesting perspective on the choice of the regularization factor $\tau$. For a small $\tau$, the system is not guaranteed to converge. A moderate selection of $\tau$ will guarantee convergence to a QRE. As $\tau$ increases further, the rate of convergence becomes faster, yet the corresponding QRE becomes less rational and more stochastic and is generally less desirable. As shown in the next section, it is crucial to find a suitable $\tau$ that has a fast convergence speed but still retains rationality.
}

\begin{figure}[!b]
    \centering
    \includegraphics[width=0.45\textwidth]{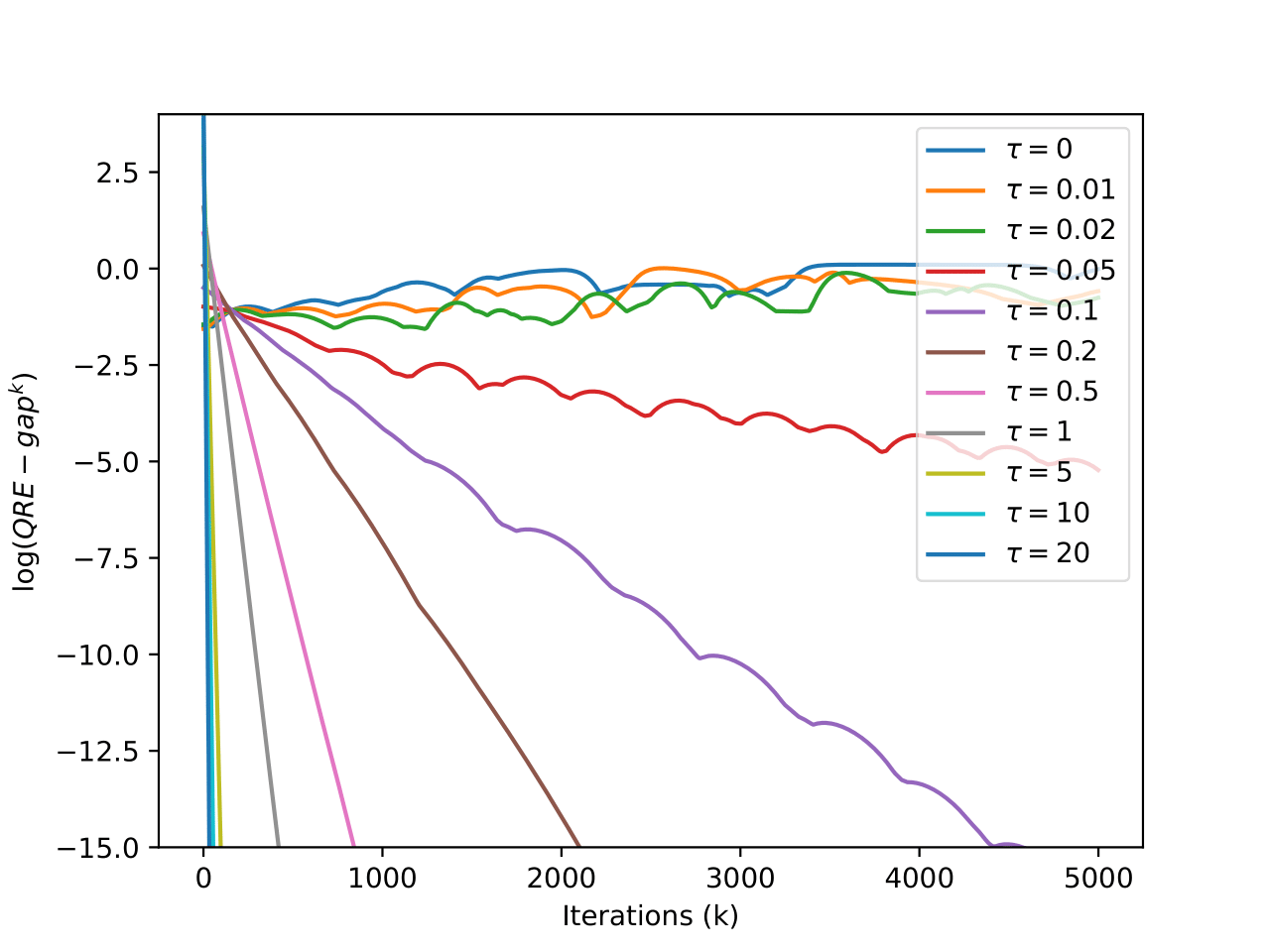}
    \caption{Convergence of \textit{QRE-gap} in random game with different $\tau$.}
    \label{fig:random-game}
\end{figure}
\section{\MakeUppercase{Numerical Results}}\label{sec:exps}

In the previous section, we established the convergence rate for \textit{QRE-gap} in games. In this section, we verify the analytical results through three sets of experiments.
\begin{figure*}
    \centering
\captionsetup{justification=centering}

\begin{subfigure}{0.32\textwidth}
    \centering
    \includegraphics[width=\textwidth]{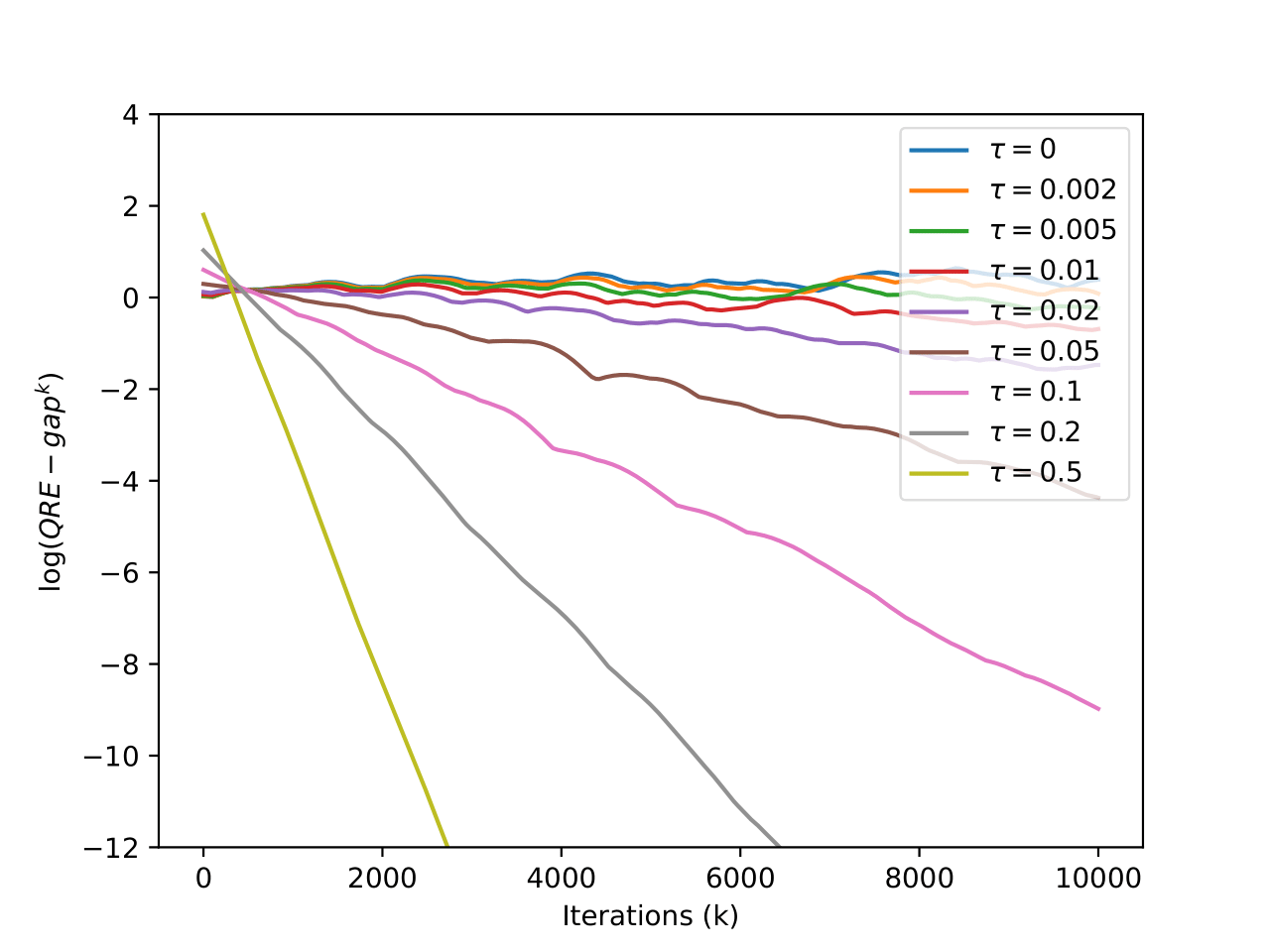}
    \caption{Convergence of \textit{QRE-gap} for \\ network zero-sum games.}
    \label{fig:ring-QRE}
\end{subfigure}
\begin{subfigure}{0.32\textwidth}
    \centering
    \includegraphics[width=\textwidth]{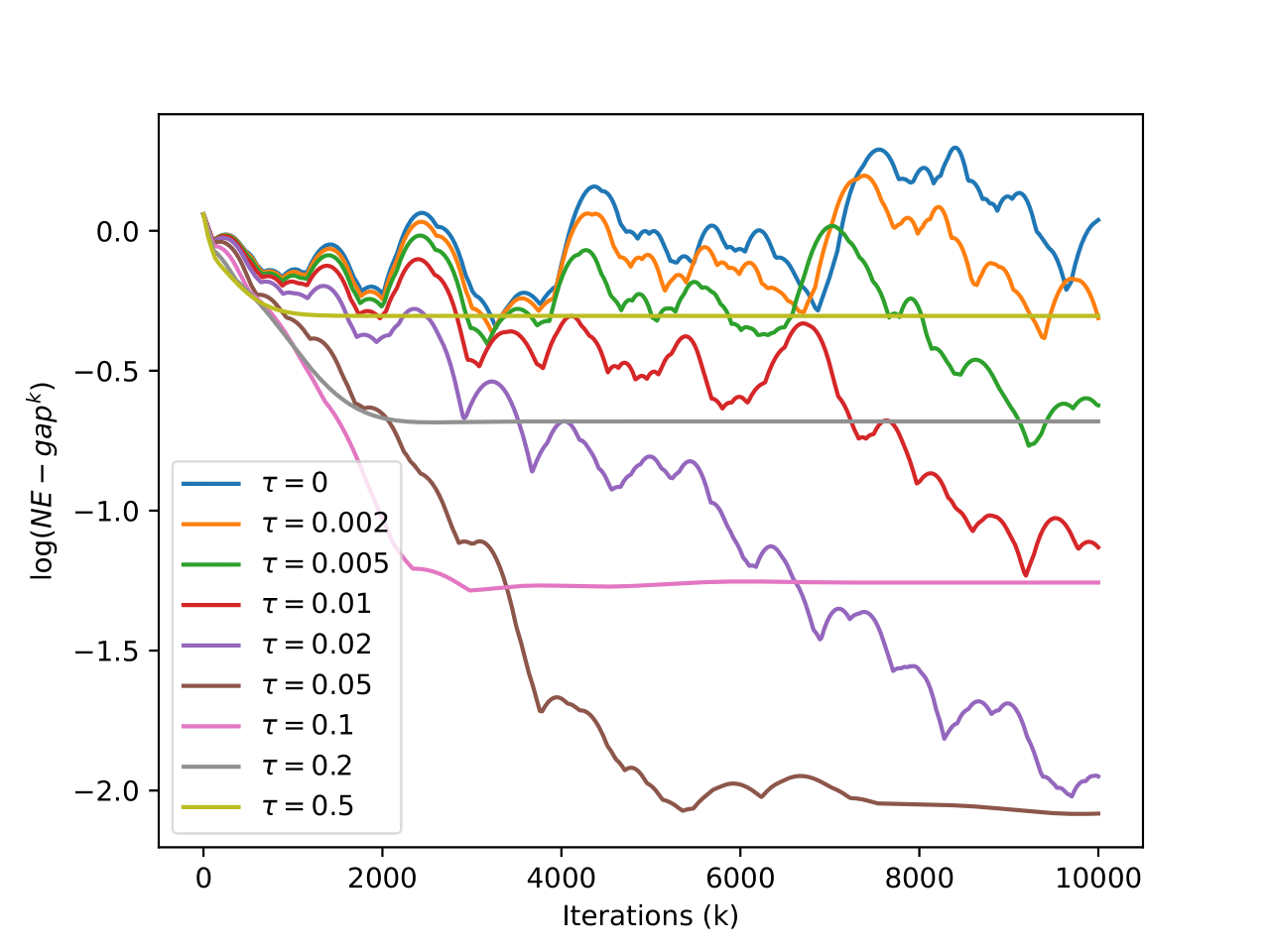}
    \caption{System \textit{NE-gap} for \\ network zero-sum games.}
    \label{fig:ring-NE}
\end{subfigure}
\begin{subfigure}{0.32\textwidth}
    \centering
    \includegraphics[width=\textwidth]{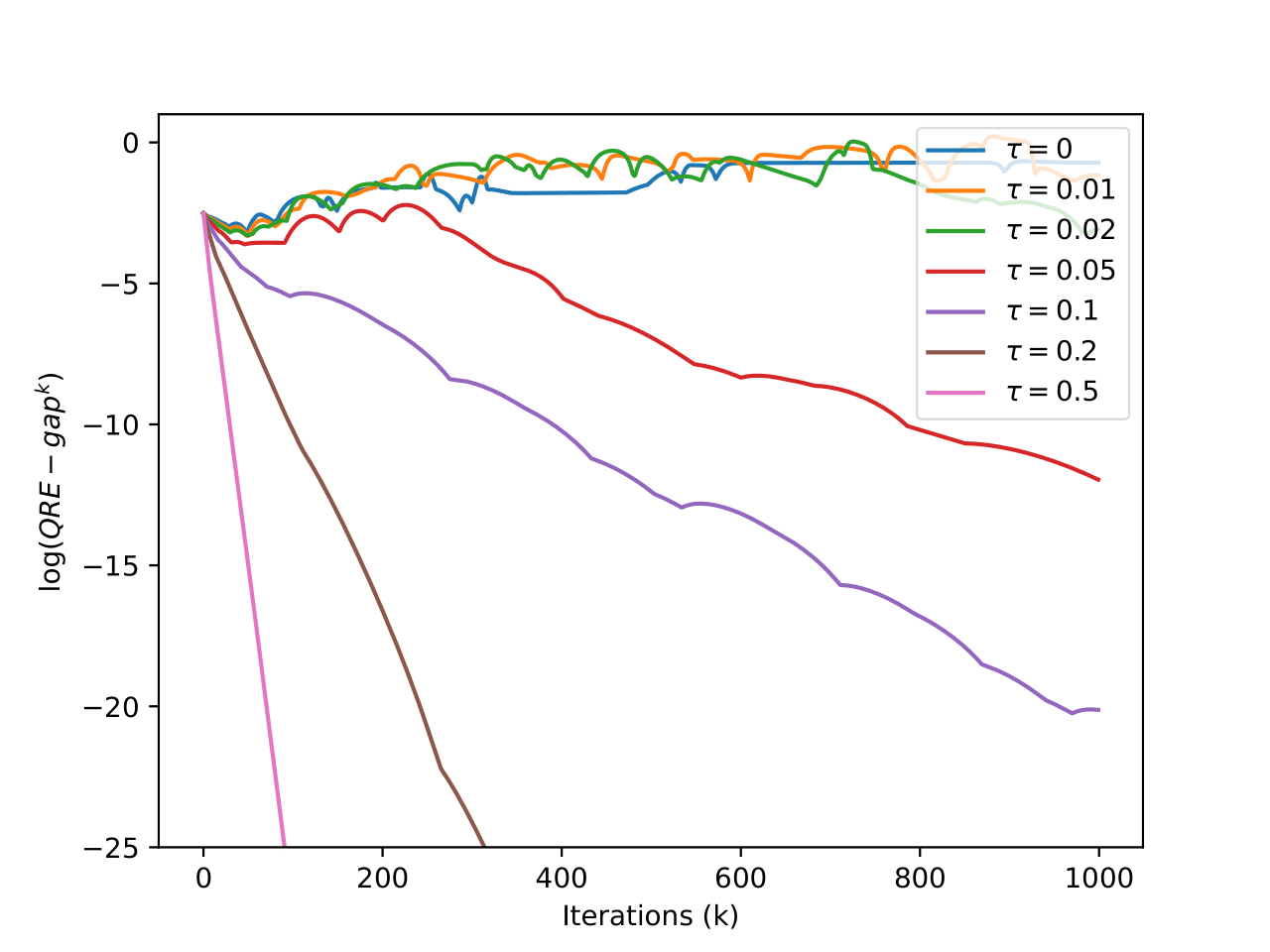}
    \caption{System \textit{QRE-gap} in synthetic Markov games with different regularization.}
    \label{fig:markov-game}
\end{subfigure}
 \caption{Network zero-sum game and Markov game}\label{fig:ring}
\end{figure*}
\subsection{Synthetic Game Setting}\label{sec:syn-game}

We first consider a multi-agent \change{system} where the rewards $r_i$ are generated randomly and independently. We set the number of agents to $n=3$, with each agent having a different \change{discrete action space size}, {$|\Ac_1| = 3, |\Ac_2| = 4, |\Ac_3| = 5$.} At the start of the experiment, all agents are initialized with random policies. 
This setting is similar to that of \cite{sun2023provably}; however, the rewards assigned to the agents in \cite{sun2023provably} satisfy the potential game assumption, but they are set to be independent in our experiments.
We use the same randomized reward and initial policy across a selection of regularization factors.
The learning rate is set to $\eta_{\tau} = \frac{1}{2(\tau - 2\sum_{i} |\Ac_i|)}$, which is within the range given in Theorem \ref{thm:static} that guarantees convergence.

The results are presented in Fig. \ref{fig:random-game} in log-scale. \change{It can be seen that when there is no regularization, or the regularization factor is negligible, the system fails to converge. As $\tau$ increases, the system converges without strict monotonicity.} When $\tau$ is large enough, the \textit{QRE-gap} decreases monotonically and converges to zero at a linear rate, with the decay rate increasing as $\tau$ gets larger. The system dynamics in Fig. \ref{fig:random-game} perfectly verifies our finding on conditions of $\tau, \eta$ and the convergence rate in Theorem \ref{thm:static}.

Analytical results and experiments both indicate that the system converges faster when there is a larger weight on regularization, with the system actually failing to converge if the regularization term is too small. 
\change{This observation aligns with our analytical results in Theorem \ref{thm:static}.}
In practice, a trade-off needs to be maintained, such that the system convergence is guaranteed, yet the QRE is still meaningful.
Furthermore, we find that for the synthetic reward experiment above, a regularization factor of $\tau \approx 0.1$ is enough for the system to converge with a linear rate. Interestingly, this requirement is significantly smaller than the requirement provided in Theorem \ref{thm:static}. This could be due to the random generation of rewards, whereas the theorem provides a bound in the worst-case scenario.

\subsection{Network Zero-Sum Games}
Next, we focus on the special game setting of zero-sum games in the network setting with polymatrix rewards \cite{bailey2019multi}. 
This problem cannot be solved using the vanilla independent NPG update and requires additional design elements such as using extra-gradient methods \cite{cen2020fast}. However, most of the methods are restricted to the two-agent setting and cannot be adapted to the network setting.

We consider a 5-agent network with a ring graph. Each edge denotes a randomly generated zero-sum matrix game between the two neighbors. All agents are assumed to have the same action space $|\Ac_1| = ... = |\Ac_5| = 10$.

We first present the \textit{QRE-gap} in Fig. \ref{fig:ring-QRE} (linear scale). It can be seen that the convergence properties in this setting mirror those of the synthetic game in Section \ref{sec:syn-game}. We also present the results on the \textit{NE-gap} of the system shown in Fig. \ref{fig:ring-NE}. 
\change{As previously mentioned, \textit{NE-gap} can be recovered by setting $\tau=0$ in \textit{QRE-gap}. We note that \textit{NE-gap} only depends on the current policy and is independent of the algorithm updates.
The results largely mirror our first experiment,}
With a moderate regularization, the system is able to converge to stationarity with relatively small \textit{NE-gap}. When the regularization term is too large, the system does converge but with a somewhat undesirable \textit{NE-gap}.

\subsection{Markov Games}\label{sec:markov-exps}
We now extend our experiments to the stochastic setting and use independent NPG to solve general Markov games. The Markov games setting can be seen as a generalization of the Markov decision process used in single-agent RL. Both the policy and reward depend on the current state of the system, which evolves according to a transition probability kernel $P: \Sc \times \Ac \rightarrow \Delta (\Sc)$, and the agent value function becomes a discounted cumulative reward. 

We refer to \cite{zhangglobal} for the exact problem formulation and definition for natural gradients. We define the exact update for entropy-regularized NPG in Markov games as
$$\pi_i^{k+1}(a_i|s) \propto  \pi_i^k(a_i|s)^{1-\eta \tau} \exp ({\frac{\eta}{1-\gamma} \bar{A}_i^{k}(s,a_i)}),$$
where $\bar{A}_i^{k}(s,a_i)$ denotes the marginalized advantage function defined therein.


We consider a synthetic Markov game with agent number $n=3$, each with action space $|\Ac_i| = 5$, with the total number of states set to $|\Sc|=5$. 

We plot the log-scale results in Fig. \ref{fig:markov-game}. The figure shows that the convergence properties of Markov games closely resemble those of static games, suggesting that our theoretical results in Section \ref{sec:thm} could potentially be extended to the stochastic setting.


\section{CONCLUSIONS AND FUTURE WORKS}
In this paper, we have studied the independent NPG algorithm with entropy regularization under the general multi-agent game setting. We have shown that the system converges to QRE under independent NPG updates, and that the rate of convergence is linear. However, such convergence only occurs with a sufficiently large regularization. On the other hand, a system with inadequate regularization may fail to converge. Experimental results were provided for both cases across various settings.

There are still many open problems that could be studied for policy gradient-based algorithms in games. A future direction of this work is to extend the analytical results to the stochastic (Markov) game setting. Our preliminary experiments have suggested that stochastic games could enjoy similar convergence to static games. This topic may contribute to multi-agent reinforcement learning, where the system is generally assumed to be Markov. 
Another potential direction is to consider the scenario where oracle information is unavailable, and the policy gradient needs to be estimated via sampling.
Lastly, our analysis can be extended to policy gradient-based algorithms such as safe MARL, robust MARL, and multi-objective MARL, following recent literature in single-agent RL \cite{zhou2023natural, zhou2022anchor}.

\section{ACKNOWLEDGMENTS}

This material is based upon work partially supported by the US Army Contracting Command under W911NF-22-1-0151 and W911NF2120064, US National Science Foundation under CMMI-2038625, and US Office of Naval Research under N00014-21-1-2385.

\bibliography{ref}
\bibliographystyle{plain}

\newpage
\onecolumn

\section*{\change{Appendix}}\label{sec:supp}
\subsection{Calculating the Policy Gradient}

We provide the exact calculations of the derivative $\frac{\partial \hat{r}_{i}({\pi})}{\partial \theta_{i}(a_k)}$ here. We note that 

First by the chain rule of composition of derivatives,
\begin{align*}
    \frac{\partial \hat{r}_{i}({\pi})}{\partial \theta_{i}} = \frac{\partial \hat{r}_{i}({\pi})}{\partial \pi_{i}} \frac{\partial \pi_{i}}{\partial \theta_{i}}
\end{align*}
where by the definition of $\hat{r}_{i}({\pi})$, we can calculate as
\begin{align*}
    \frac{\partial }{\partial \pi_{i}}\hat{r}_{i}({\pi}) &= \frac{\partial }{\partial \pi_{i}} \left(r_{i}(\pi)+\tau \mathcal{H}(\pi_i)\right)\\
    &= \frac{\partial }{\partial \pi_{i}} \left(\sum_{a_i \in \mathcal{A}_i}  \pi_i(a_i) \bar{r}_i(a_i)-\tau \sum_{a_i \in \mathcal{A}_i} \pi_i(a_i) \log \pi_i(a_i)\right)\\
    &=   \bar{r}_i(\cdot)-\tau \log \pi_i(\cdot) -\tau \mathbf{1}
\end{align*}

We next compute the Jacobian matrix  $\frac{\partial \pi_{i}}{\partial \theta_{i}}$, for the $j, k$-th element, we have

\begin{align*}
    \left[\frac{\partial \pi_{i}}{\partial \theta_{i}}\right]_{j,k} = & \frac{\partial \pi_{i}(a_j)}{\partial \theta_{i}(a_k)}\\
     = & \frac{\partial }{\partial \theta_{i}(a_k)} \frac{\exp{(\theta_i(a_j))}}{\sum_{a_l \in \mathcal{A}_i} \exp{(\theta_i(a_l))}}\\
     =& \mathbb{1}_{j=k} \frac{\exp{(\theta_i(a_j))}}{\sum_{a_l \in \mathcal{A}_i} \exp{(\theta_i(a_l))}} - \frac{\exp{(\theta_i(a_j))}\exp{(\theta_i(a_k))}}{\left(\sum_{a_l \in \mathcal{A}_i} \exp{(\theta_i(a_l))}\right)^2}\\
     = & \mathbb{1}_{j=k} \pi_i(a_j) - \pi_i(a_j)\pi_i(a_k)
\end{align*}

Therefore, we can write the derivative of 
\begin{align*}
    \frac{\partial \hat{r}_{i}({\pi})}{\partial \theta_{i} (a_k)} = & \sum_{a_j \in \mathcal{A}_i}\frac{\partial \hat{r}_{i}({\pi})}{\partial \pi_{i}(a_j)} \frac{\partial \pi_{i}(a_j)}{\partial \theta_{i}(a_k)}\\
    = & \pi_i(a_k)\left(\bar{r}_i(a_k)-\tau \log \pi_i(a_k) -\tau  \right) - \sum_{a_j \in \mathcal{A}_i} \pi_i(a_j)\pi_i(a_k) \left(\bar{r}_i(a_j)-\tau \log \pi_i(a_j) -\tau  \right) \\
    = & \pi_i(a_k)\left(\bar{r}_i(a_k)-\tau \log \pi_i(a_k) -\tau  \right) - \pi_i(a_k) \sum_{a_j \in \mathcal{A}_i} \pi_i(a_j) \left(\bar{r}_i(a_j)-\tau \log \pi_i(a_j)  \right)  + \tau \pi_i(a_k)\\
    = & \pi_i(a_k)\left(\bar{r}_i(a_k)-\tau \log \pi_i(a_k)   -  \sum_{a_j \in \mathcal{A}_i} \pi_i(a_j) (\bar{r}_i(a_j)-\tau \log \pi_i(a_j) ) \right)  \\
    = & \pi_i(a_k)\left(\bar{r}_i(a_k)-\tau \log \pi_i(a_k)   -  \hat{r}_{i}(\pi) \right)  \\
\end{align*}

\subsection{Proof of Lemma \ref{lem:diff-pi}}
In this section, we provide a detailed analysis of the previous 
Lemma \ref{lem:diff-pi}
in Section \ref{sec:thm}.

\begin{proof}
    We provide the proof for $n=2$. For the case where $n>2$, we can consider the first $n-1$ probability distributions as a joint distribution, and the lemma is proven by induction. 

    For $n=2$, we denote the first set of policies as $\pi_1^1, \pi_2^1$, and the second set as $\pi_1^2, \pi_2^2$, respectively. We note that $\pi_i^1, \pi_i^2 \in \Delta(\Ac_i), i =  1,2.$

    The total variation distance can be bounded by the following inequality,
    \begin{align*}
        &TV(\pi_1^1\times \pi_2^1,\pi_1^2\times \pi_2^2 )\\
        = & \sum_{a_1 \in \Ac_1} \sum_{a_2 \in \Ac_2} \left|
            \pi_1^1(a_1)\pi_2^1(a_2) - \pi_1^2(a_1)\pi_2^2(a_2)
        \right|\\
        = & \sum_{a_1 \in \Ac_1} \sum_{a_2 \in \Ac_2} \big|
            \pi_1^1(a_1)\pi_2^1(a_2) - \pi_1^2(a_1)\pi_2^1(a_2) \\
            &\ \ + \pi_1^2(a_1)\pi_2^1(a_2) - \pi_1^2(a_1)\pi_2^2(a_2)
        \big|\\
        \leq & \sum_{a_1 \in \Ac_1} \sum_{a_2 \in \Ac_2}  \big|\pi_1^1(a_1) - \pi_1^2(a_1) \big|\pi_2^1(a_2) \\
        &\ \ + \pi_1^2(a_1)  \big|\pi_2^1(a_2) - \pi_2^2(a_2) \big|\\
        =& TV(\pi_1^1 ,\pi_1^2) + TV(\pi_2^1 ,\pi_2^2).
    \end{align*}
\end{proof}
\end{document}